\newtheorem{theorem}{Theorem}
\newtheorem{lemma}{Lemma}
\newenvironment{proof}[1][Proof]{\textbf{#1.} }{\ \rule{0.5em}{0.5em}}
\def\VAT{AIVAT}
\def\leduc{Leduc hold'em}
\def\Leduc{Leduc Hold'em}
\def\texas{Texas hold'em}
\def\Texas{Texas Hold'em}
\def\etal{\textit{et al.}}
\newcommand{\ie}{{\it i.e., }}
\begin{document}

\title{AIVAT: A New Variance Reduction Technique for Agent Evaluation in Imperfect Information Games}
\author{Neil Burch, Martin Schmid, Matej Moravčík, Michael Bowling \\
Department of Computing Science \\
University of Alberta\\
\{nburch,mschmid,moravcik,mbowling\}@ualberta.ca
}
\maketitle

\begin{abstract}
Evaluating agent performance when outcomes are stochastic and agents
use randomized strategies can be challenging when there is limited
data available. The variance of sampled outcomes may make the simple
approach of Monte Carlo sampling inadequate. This is the case for
agents playing heads-up no-limit Texas hold'em poker, where
man-machine competitions have involved multiple days of consistent
play and still not resulted in statistically significant conclusions
even when the winner's margin is substantial. In this paper, we
introduce \VAT{}, a low variance, provably unbiased value assessment
tool that uses an arbitrary heuristic estimate of state value, as well
as the explicit strategy of a subset of the agents. Unlike existing
techniques which reduce the variance from chance events, or only
consider game ending actions, \VAT{} reduces the variance both from
choices by nature and by players with a known strategy. The resulting
estimator in no-limit poker can reduce the number of hands needed to
draw statistical conclusions by more than a factor of 10.
\end{abstract}

\section{Introduction}
Evaluating an agent's performance in stochastic settings can be hard.
Non-zero variance in outcomes means the game must be played multiple
times to compute a confidence interval that likely contains the true
expected value. Regardless of whether the variance arises from player
actions or from chance events, we might need to observe many samples
before we get a narrow enough interval to draw desirable conclusions.
In many situations, it is simply not feasible (e.g., when the
evaluation involves human participation) to simply observe more
samples, so we must turn to statistical techniques that use additional
information to help narrow the confidence interval.

This agent evaluation problem is commonly encountered in games, where
the goal is to estimate the expected performance difference between
players. For example, consider poker games. Poker is not only a
long-standing challenge problem for
AI~\cite{VN28,KollerPfeffer97,BillingsEtAl02} with annual
competitions~\cite{ZinkevichLittman06,ACPC}, but also a very popular
game played by an estimated 150 million players
worldwide~\cite{Economist07}. Heads-up no-limit Texas hold'em (HUNL)
is a particular variant of the game that has received considerable
attention in the AI community in recent years, including a ``Brains
vs. AI'' event pitting Claudico~\cite{Claudico}, a top HUNL computer
program, against professional poker players. That match involved
80,000 hands of poker, played over seven days, involving four poker
players, playing dozens of hours each. Despite Claudico losing by over
9 big blinds per 100 hands (a margin that is considered huge by poker
professionals)~\cite{Pokerfuse-Claudico}, the result is only on the
edge of statistical significance, making it hard to draw a conclusion
from this large investment of human time.

Previous techniques for variance reduction to achieve stronger
statistical conclusions in this setting have used two broad classes of
statistical techniques. Techniques like MIVAT~\cite{09white-mivat} use
the method of control variates with heuristic value estimates to
reduce the variance caused by chance events. The technique of
importance sampling over imaginary observations~\cite{08icml-ispoker}
takes a different approach, using knowledge of a player strategy to
evaluate multiple states given a single observation. Imaginary
observations can be used to reduce the variance caused by privately
observed chance events, as well as the player's randomly chosen choice
of whether to make any actions which would immediately end the game.

Techniques from the two classes can be combined, but are not
specifically designed to work together for the greatest reduction in
variance, and none of the techniques deal with the variance caused by
non-terminal action selection. Because good play in imperfect
information games generally requires randomised action selection,
ignoring action variance is an important shortcoming. We introduce the
action-informed value assessment tool (\VAT{}), an unbiased
low-variance estimator for imperfect information games which extends
the use of control variates to player actions, and makes explicit use
of imaginary observations to exploit knowledge of the game structure
and player strategies.

\section{Background}
This paper focuses on variance reduction when evaluating agents for
extensive form games, a class of imperfect information sequential
decision making problems. Formally, an extensive form game is a set of
of players $P$ and chance player $p_c$, a set of states $S$ described
as a history of actions from the initial state $\varnothing$, a set
$Z \subset S$ of terminal states, acting player $p(h) : S \setminus
Z \mapsto P \bigcup \{p_c\}$, player value functions $v_p(z) :
Z \mapsto \mathbf{R}$, and information partitions $\mathcal{I}_p$ of
$\{h \in S | p(h)=p\}$. We will say $h \sqsubset h'$ if a game in
state $h'$ was previously in state $h$, $h \sqsubseteq h'$ if
$h \sqsubset h'$ or $h=h'$, $A(h)$ is the set of valid actions at $h$,
and $h \cdot a$ is the successor state of $h$ that is reached by
making action $a$. For all states $h$ such that $p(h) = p_c$,
$\sigma_{p_c}(h,a)$ is the publicly known probability distribution
over possible chance outcomes at state $h$.

An information set $I \in \mathcal{I}_p$ describes a set of states
that player $p$ can not distinguish due to imperfect information of
the game state. Any player decision is therefore made at information
sets, not states. A behaviour strategy $\sigma_p(I,a)$ gives the
probability of player $p$ making decision $a$ at information set
$I$. The behaviour in a state is determined by the information set
$I$, so that $\forall h \in I$ $\sigma_p(h,a) = \sigma_p(I,a)$. We
will say the probability of reaching a state $h$ is $\pi(h)
= \Pi_{h' \cdot a} \sigma_{p(h')}(h',a)$.  It is also useful to
consider $\pi_p(h) = \Pi_{h' \cdot a \sqsubseteq h,
p(h')=p} \sigma_p(h',a)$, the probability of a player reaching state
$h$ if all other players play to reach $h$. This notation can be
extended so that for any set of players $T$, $\pi_{T}(h) = \Pi_{p \in
T} \pi_p(h)$.

When talking about estimating the value for players in a game, we are
trying to find the expected value $\mathop{\mathbb{E}}_z[v_p(z)]
= \sum_{z \in Z}\pi(z)v_p(z)$. An estimator $e(z)$ is said to be
unbiased if the expected value $\mathop{\mathbb{E}}_z[e(z)]
= \mathop{\mathbb{E}}_z[v_p(z)]$. Having an estimator be provably
unbiased is important because it is in some sense truthful: a player
can not appear to do better by changing their play to take advantage
of the estimation method.

\section{MIVAT and Imaginary Observations}
\VAT{} is an extension of two earlier techniques, MIVAT and importance
sampling over imaginary observations. MIVAT~\cite{09white-mivat} and
its precursor DIVAT~\cite{06zinkevich-divat} use value functions for a
control variate that estimates the expected utility given observed
chance events. Conceptually, the techniques subtract the expected
chance utility to get a lower variance value which mostly depends on
the player actions. For example, in poker, it is likely that good
hands end in positive outcomes and bad hands end in negative
outcomes. Starting with the observed outcome, we could subtract some
value for good hands and add a value for bad hands, and we would
expect the corrected value to have lower variance. If the expected
value of the correction terms is zero, we can use the lower variance
corrected value as an unbiased estimator of player value.

DIVAT requires a strategy for all players to generate value estimates
for states through self-play, which MIVAT generalised by allowing for
arbitrary value functions defined after chance events. MIVAT adds a
correction term for each chance event in an observed state. In order
to remain unbiased despite using an arbitrary value estimation
function $u(a)$, MIVAT uses a correction term of the form
$\mathop{\mathbb{E}}_a[u(a)] - u(o)$ for an observation with outcome
$o$. Computing this expectation requires us to know the probability
distribution that $o$ was drawn from, which is true in the case of
chance events as $\sigma_{p_c}$ is public knowledge. These terms are
guaranteed to have an expected value of zero, making the MIVAT value
(observed value plus correction terms) an unbiased estimate of player
value.  In a game like poker, MIVAT will account for the dealer giving
a player favourable or unfavourable cards, but not for lucky player
actions selected from a randomised strategy.

Imaginary observations with importance sampling~\cite{08icml-ispoker}
use knowledge of a player's strategy to compute an expected value of
multiple states given an observation of a single state. Due to
imperfect information, there may be many states which are all
guaranteed to have the same probability of the opponent making their
actions. If we consider importance sampling over these imaginary
observations, the opponent's probability of reaching the state cancels
out so we do not need the opponent's strategy. By taking an
expectation over a set of states for every observation, we get a lower
variance value.

There are two kinds of situations where we can use imaginary
observations. First, for any states $h$ where player $p$ could have
made an action $a$ which ends the game, we can add the imaginary
observation of the terminal state $h \cdot a$. For example, in poker
this lets us consider player $p$ folding to a bet they called or
raised, or calling a bet we folded to in the final round. Second,
because of the information partitions in imperfect information games,
there may be other states that have identical opponent
probabilities. In poker, this lets us consider all the states where
the public player actions are the same, the opponent private cards and
public board cards are the same, but player $p$ has different private
cards. Imaginary observations do not let us reduce the variance caused
by choosing non-terminal actions or the outcomes of publicly visible
chance events.

MIVAT and imaginary observations consider different information and
can be combined to get a value estimate with lower variance than
either technique used individually. Instead of using the terminal
value $v(z)$ for an imaginary observation $z$, we could use the MIVAT
value estimate given $z$. However, because neither technique has terms
which address the effect of non-terminal actions, we would never
expect this combination of techniques to produce a zero variance value
estimate.  Even with a ``perfect'' value function that correctly
estimates the expected value of a state and action for the players,
there would still be some variance in the value estimate due to the
random action selection by players.

\section{\VAT{}}
Conceptually, \VAT{} combines the chance correction terms of MIVAT
with imaginary observations across private information, along with new
MIVAT-like correction terms for player actions. The \VAT{} estimator
is the sum of a base value using imaginary observations, plus
imaginary observation correction terms for both player actions and
chance events. Roughly speaking, moving backwards through the choices
in an observed game, the \VAT{} correction terms are constructed in a
fashion that shifts an estimate of the expected value after a choice
was made towards an estimate of the expected value before the choice.

Because imaginary observations with importance sampling provides an
unbiased estimate of the expected value of the players, and the
MIVAT-like terms have an expected value of zero, \VAT{} is also an
unbiased estimator of the expected player value. Furthermore, with
well-structured games, ``perfect'' value functions, and knowledge of
all player strategies, we could see zero variance: the imaginary
observation values and the correction terms would sum to the expected
player value, regardless of the observed game.

Figure~\ref{fig:vat-diagram} gives a high level overview of MIVAT,
imaginary observations, and \VAT{}. In this example, we are interested
in the expected value for player 1, and know player 1's strategy. We
use an observation of one hand of \leduc{} poker, a small synthetic
game constructed for artificial intelligence
research~\cite{05uai-bayes}. \leduc{} is a two round game with one
private card for each player, and one publicly visible board card that
is revealed after the first round of player actions. In the example,
player 1 is dealt \textbf{Q$\spadesuit$} and player 2 is
dealt \textbf{K$\spadesuit$}. Player 1 makes the \textbf{check} action
followed by a player 2 \textbf{check} action. The public board card is
revealed to be \textbf{J$\heartsuit$}. After the round two
actions \textbf{check}, \textbf{raise}, \textbf{call}, player 1 loses
5 chips.

\begin{figure}[!ht]
\includegraphics[width=0.99\hsize]{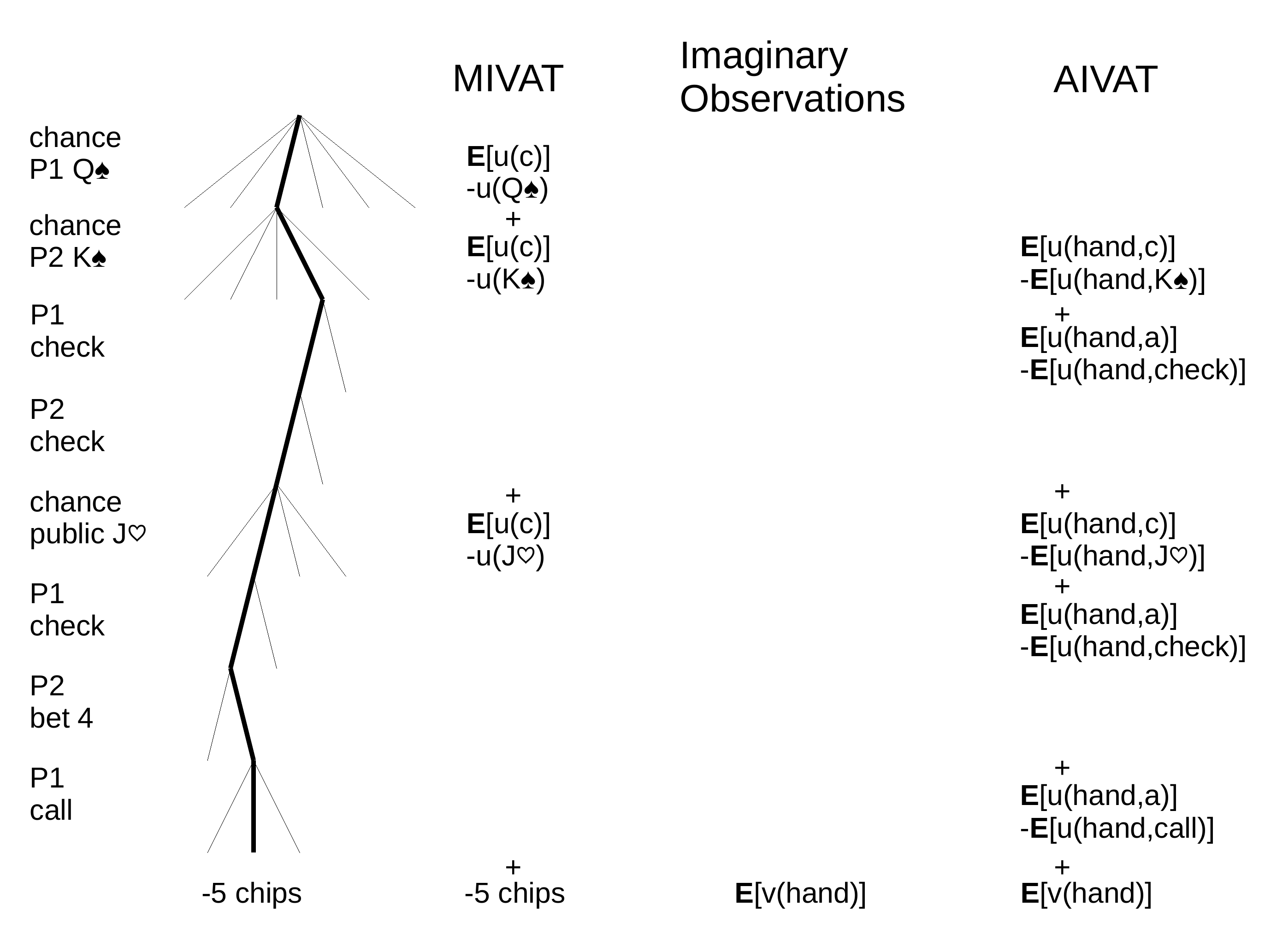}
\caption{Comparison of MIVAT, imaginary observations, and \VAT{}}
\label{fig:vat-diagram}
\end{figure}

\subsection{\VAT{} Correction Terms}
We start by describing the correction terms added for chance events
and actions. Given information about a player's strategy, we can treat
that player's choice events as chance events and construct MIVAT-like
correction terms for them. The player strategy also allows imaginary
observations considering alternative histories with identical opponent
probabilities, so we can compute an expectation over a set of
compatible histories rather than using the single observed outcome.

The correction term at a decision point will be the expectation across
all compatible histories of the expected value before a choice, minus
the value after the observed choice. As with MIVAT, the values are
estimated using an arbitrary fixed value function to estimate the
value after every decision. Value estimates which more closely
approximate the true expected value will result in greater variance
reduction.

To consider imaginary observations, we need at least one player for
which we know the know the strategy. Let $P_a$ be a non-empty set of
players, including $p_c$, such that $\forall p \in P_a$ we know
$\sigma_p$, and $P_o = P \setminus P_a$ be the set of opponent players
for which we do not know the strategy. If $P_a = \{p_c\}$ then \VAT{}
would be identical to MIVAT. We must also partition the states into
the sets we can evaluate given an observation of a completed game. Let
$\mathcal{H}$ be a partition of states $\{ h | p(h) \in P_a\}$ such
that $\forall H \in \mathcal{H}$ and $\forall h,h' \in H$,
\begin{enumerate}
\item $\forall p \in P_o$ $\forall \sigma_p$ $\pi_p(h) =
  \pi_p(h')$. For example, this can be enforced by requiring $h$ and
  $h'$ to pass through the same sequence of player $p$ information
  sets and make the same actions at those information sets.
\item $h \not \sqsubset h'$. This implies a uniqueness property, where
  for any terminal $z$, $\{ h'' | h'' \sqsubset z, h'' \in H\}$ is
  either empty or a singleton.
\item We will extend the actions so that $A'(h) = \bigcup_{h''
  \in H} A(h'')$ and let $\sigma(h,a)=0$ $\forall a \in
  A'(h) \setminus A(h)$. Because $A'(h)=A'(h')$ we will say
  $A(H)=A'(h)$.
\end{enumerate}

Similar to MIVAT, we need value functions that give an estimate of the
expected value after an action. Let there be arbitrary functions
$u_h(a):A'(h)\mapsto\mathbf{R}$ for each state $h$ where $p(h) \in
P_a$. Say we have seen a terminal state $z$. Consider a part $H \in
\mathcal{H}$. If $\nexists h \in H$ such that $h \sqsubset z$, then
the correction term $k_H(z)=0$. Otherwise, property 2 of $\mathcal{H}$
implies there is a unique observed action $a_\text{O}$ such that
$h \cdot a_\text{O} \sqsubseteq z, h \in H, a_\text{O} \in A(h)$, and
the correction term is
\begin{align*}
k_H(z)=
\frac{\sum_{a \in A(H)} \sum_{h \in H} \pi_{P_a}(h \cdot a)u_h(a)}
{\sum_{h \in H} \pi_{P_a}(h)} \\
- \frac{\sum_{h \in H}\pi_{P_a}(h \cdot a_\text{O})u_h(a_\text{O})}
{\sum_{h \in H} \pi_{P_a}(h \cdot a_\text{O})}
\end{align*}
\VAT{} uses the sum of $k_H(z)$ across all $H \in \mathcal{H}$.

\subsection{\VAT{} Base Value}
The \VAT{} correction terms have an expected value of zero, and are
not a value estimate by themselves. They must be combined with an
unbiased estimate of player value. For improved variance reduction,
the form of the correction terms must match the choice of base value
estimate.

To see how the terms match, consider a simplified version of \VAT{}
where the final correction term for a terminal state $h \cdot o$ has
the form $\mathop{\mathbb{E}}_a[u_h(a)] - u_h(o)$. Ideally, we would
like the value estimate for $h \cdot a$ to be $u_h(a)$. The value
estimate plus the correction term will then have the same value
$\mathop{\mathbb{E}}_a[u_h(a)]$ for all actions at $h$, resulting in
zero variance.

For the \VAT{} correction terms, the correct choice is to use
imaginary observations of all possible private information for players
in $P_a$, as in ``Example 3: Private Information'' of the paper by
Bowling~\etal{}~\cite{08icml-ispoker}. In poker, it corresponds to
evaluating the game with all possible private cards, weighted by the
likelihood of holding the cards given the observed game. For
completeness, we formally describe the particular instance of this
existing estimator using the notation of this paper.

Given the correction term partition $\mathcal{H}$ of player $P_a$
states, we construct a matching partition $\mathcal{W}$ of terminal
states such that $\forall W \in \mathcal{W}$ and $\forall z,z' \in W$,
\begin{itemize}
\item $\forall p \in P_o$ $\forall \sigma_p$ $\pi_p(z) =
  \pi_p(z')$.
\item a player in $P_a$ made an action in $z$ $\iff$ a player in $P_a$
  made an action in $z'$.
\item if a player in $P_a$ made an action in $z$, then for the longest
  prefix $h \sqsubset z$ and $h' \sqsubset z'$ such that $p(h) \in
  P_a$ and $p(h') \in P_a$, both $h$ and $h'$ are in the same part of
  $\mathcal{H}$.
\end{itemize}
The last two conditions on $\mathcal{W}$ ensure that the imaginary
observation estimate does not include terminal states that the
correction terms will also account for. This rules out a form of
double counting which would not produce a biased estimator, but would
increase the variance when using high quality estimates in the
correction terms.

If we observe a terminal state $z$, let $W \in \mathcal{W}$ be the
part such that $z \in W$. The base estimated value for player $p$ is
\begin{align*}
\frac{\sum_{z' \in W} \pi_{P_a}(z') v_p(z')}{\sum_{z' \in W}\pi_{P_a}(z')}
\end{align*}

\subsection{\VAT{} Value Estimate}
The \VAT{} estimator gives an unbiased estimate of the expected value
$\mathop{\mathbb{E}}_z[v_p(z)]$. If we use partitions $\mathcal{H}$
and $\mathcal{W}$ as described above, and are given an observation of
a terminal state $z \in W \in \mathcal{W}$, the value estimate is
\begin{align}
\label{eqn:vat}
\text{\VAT{}}(z) = \frac{\sum_{z' \in W} \pi_{P_a}(z')
v_p(z')}{\sum_{z' \in W}\pi_{P_a}(z')} + \sum_{H \in \mathcal{H}}
k_H(z)
\end{align}

Note that there is a subtle difference between \VAT{} and a simple
combination of imaginary observations and an extended MIVAT framework
using player strategy information to add control variates for
actions. Using an extended MIVAT plus imaginary observations, we would
consider the expected MIVAT value estimate across all terminal
histories compatible with the observed terminal state. In \VAT{}, for
each correction term we would consider all histories compatible with
the state at that decision point.

As a concrete example of the difference, consider the game used in
Figure~\ref{fig:vat-diagram}. MIVAT with imaginary observations would
only consider private cards for player 1 that do not conflict with the
opponent's \textbf{K$\spadesuit$} or the public
card \textbf{J$\heartsuit$}, even when computing the
$\mathop{\mathbb{E}}[u(c)]-u(\textbf{J$\heartsuit$})$ control variate
term for the public card. In contrast, \VAT{}
considers \textbf{J$\heartsuit$} as a possible player card for the
term.

\section{Unbiased Value Estimate}
It is desirable to have an unbiased value estimate for games, so that
players can not improve their estimated value by changing their
strategy to fit the estimation technique. We prove that \VAT{} is
unbiased. The value estimate $\text{AIVAT}(z)$ in
Equation~\ref{eqn:vat} is a sum of two parts. The fraction in the
first part is an unbiased estimator based on imaginary
observations~\cite{08icml-ispoker}, so we only need to show that the
sum of all $k_H$ terms has an expected value of $0$.

\begin{lemma}
\label{lemma:average-zero}
$\forall H \in \mathcal{H}$ $\mathop{\mathbb{E}}_{z \in Z} [ k_H(z) ] = 0$
\end{lemma}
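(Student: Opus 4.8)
The plan is to compute $\mathop{\mathbb{E}}_{z \in Z}[k_H(z)]$ directly by summing $\pi(z) k_H(z)$ over all terminal $z$, and to show the two pieces of $k_H(z)$ cancel. First I would note that $k_H(z) = 0$ unless exactly one state $h \in H$ satisfies $h \sqsubset z$ (properties 1 and 2 of $\mathcal{H}$), so the sum over $z \in Z$ restricts to terminals passing through $H$. For each such $z$, let $h \in H$ be that unique prefix and $a_{\text{O}}$ the observed action with $h \cdot a_{\text{O}} \sqsubseteq z$; then $k_H(z)$ depends on $z$ only through the pair $(h, a_{\text{O}})$. So I would reorganize the sum as $\sum_{h \in H} \sum_{a \in A(h)} \big(\sum_{z : h \cdot a \sqsubseteq z} \pi(z)\big) k_H(z_{h,a})$, where $z_{h,a}$ denotes any terminal extending $h \cdot a$ and $k_H(z_{h,a})$ is the common value of $k_H$ on all such terminals.

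The key algebraic step is to split the reaching probability $\pi(z)$ for $z$ extending $h \cdot a$ as $\pi(z) = \pi_{P_a}(h \cdot a)\,\pi_{P_o}(h \cdot a)\,\pi(z \mid h \cdot a)$, i.e. the product of the $P_a$-controlled probability up to $h \cdot a$, the $P_o$-controlled probability up to $h \cdot a$, and the probability of the remaining actions from $h \cdot a$ to $z$. Summing $\pi(z \mid h \cdot a)$ over all terminals $z \sqsupseteq h \cdot a$ gives $1$. Crucially, property 1 of $\mathcal{H}$ says $\pi_{P_o}(h)$ is the same for all $h \in H$ and (since actions in $P_a$ states don't change $P_o$'s reach) $\pi_{P_o}(h \cdot a) = \pi_{P_o}(h)$ equals a constant $c_H$ depending only on $H$. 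Therefore $\sum_{z : h\cdot a \sqsubseteq z}\pi(z) = c_H\, \pi_{P_a}(h \cdot a)$, and the whole expectation becomes $c_H \sum_{h \in H}\sum_{a \in A(h)} \pi_{P_a}(h \cdot a)\, k_H(z_{h,a})$.

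Now substitute the definition of $k_H$. The first (positive) term of $k_H$ is a constant $\mu_H := \big(\sum_{a' \in A(H)}\sum_{h' \in H}\pi_{P_a}(h'\cdot a')u_{h'}(a')\big) / \big(\sum_{h' \in H}\pi_{P_a}(h')\big)$ not depending on $(h,a)$; weighting it by $\pi_{P_a}(h\cdot a)$ and summing over $h \in H, a \in A(h)$ gives $\mu_H \sum_{h}\sum_{a}\pi_{P_a}(h\cdot a) = \mu_H \sum_{h \in H}\pi_{P_a}(h) = $ the numerator of $\mu_H$, namely $\sum_{a}\sum_{h}\pi_{P_a}(h\cdot a)u_h(a)$. (Here I use $\sum_{a \in A(h)}\pi_{P_a}(h \cdot a) = \pi_{P_a}(h)$ since the $P_a$ strategies sum to one, extended by the zero-probability convention for $a \in A'(h)\setminus A(h)$ so $A(h)$ can be replaced by $A(H)$.) The second (negative) term of $k_H$, evaluated on $z_{h,a}$, is $\big(\sum_{h' \in H}\pi_{P_a}(h'\cdot a)u_{h'}(a)\big)/\big(\sum_{h' \in H}\pi_{P_a}(h'\cdot a)\big)$, which depends on $a$ but not on the particular $h$; weighting by $\pi_{P_a}(h\cdot a)$ and summing over $h \in H$ first collapses the denominator, leaving $\sum_{h' \in H}\pi_{P_a}(h'\cdot a)u_{h'}(a)$, and then summing over $a \in A(H)$ gives exactly $\sum_{a}\sum_{h}\pi_{P_a}(h\cdot a)u_h(a)$ again. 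The two contributions are equal, so the bracket vanishes and $\mathop{\mathbb{E}}_{z}[k_H(z)] = 0$.

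The main obstacle is bookkeeping rather than any deep idea: one has to be careful that the ``observed action'' $a_{\text{O}}$ is well-defined and that $k_H$ really is constant across all terminals sharing a given $(h, a_{\text{O}})$ (this is where property 2 and the singleton consequence are needed), and one must handle the extended action set $A'(h)$ and the $\sigma(h,a)=0$ convention so that the identity $\sum_{a \in A(H)}\pi_{P_a}(h\cdot a) = \pi_{P_a}(h)$ holds uniformly over $h \in H$. Once the sum is correctly indexed by $(h,a)$ pairs with weight $c_H\,\pi_{P_a}(h\cdot a)$, the cancellation is a short computation matching the numerator of the positive term against the reassembled negative term.
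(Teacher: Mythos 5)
Your proposal is correct and follows essentially the same route as the paper's proof: restrict to terminals passing through $H$, collapse the sum over terminals to a sum over $(h,a)$ pairs weighted by reach probabilities, use property 1 to factor out the common opponent reach and property 3 to reconcile $A(h)$ with $A(H)$, and observe that both terms of $k_H$ telescope to $\sum_{a}\sum_{h}\pi_{P_a}(h\cdot a)u_h(a)$. The only cosmetic difference is that you factor the opponent reach out as a constant $c_H$ while the paper multiplies by $\pi_{P_o}(H)/\pi_{P_o}(H)$ to work with full reach probabilities $\pi$ throughout.
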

\begin{proof}
Consider an arbitrary $H \in \mathcal{H}$. Let $Z(H) = \{ z \in Z |
\exists h \in H, h \sqsubset z\}$ be the set of terminal states
passing through $H$. Expanding definitions, using property 1 of
$\mathcal{H}$ and multiplying by $\pi_{P_o}(H)/\pi_{P_o}(H)=1$ we get
\begin{align*}
\mathop{\mathbb{E}}_{z \in Z} [ k_H(z) ] = \sum_{z \in Z} \pi(z)k_H(z) = \sum_{z \in Z(H)} \pi(z)k_H(z) \\
= \sum_{z \in Z(H)} \pi(z) \frac{\pi_{P_o}(H)}{\pi_{P_o}(H)} \frac{\sum_{a \in A(H)} \sum_{h \in H}\pi_{P_a}(h \cdot a)u_h(a)}{\sum_{h \in H} \pi_{P_a}(h)} \\
- \sum_{z \in Z(H)} \pi(z) \frac{\pi_{P_o}(H)}{\pi_{P_o}(H)} \frac{\sum_{h \in H}\pi_{P_a}(h \cdot a_\text{O})u_h(a_\text{O})}{\sum_{h \in H} \pi_{P_a}(h \cdot a_\text{O})} \\
\end{align*}
Using $\pi_{P_o}(h)\pi_{P_a}(h) = \pi(h)$
\begin{align*}
= \sum_{z \in Z(H)} \pi(z)\frac{\sum_{a \in A(H)} \sum_{h \in H} \pi(h \cdot a)u_h(a)}{\sum_{h \in H} \pi(h)} \\
- \sum_{z \in Z(H)} \pi(z)\frac{\sum_{h \in H}\pi(h \cdot a_\text{O})u_h(a_\text{O})}{\sum_{h \in H} \pi(h \cdot a_\text{O})} \\
\end{align*}
Using $\sum_{z,h \sqsubset z} \pi(z) = \pi(h)$ and $\sum_{z,h \cdot a \sqsubset z} \pi(z) = \pi(h \cdot a)$
\begin{align*}
= \sum_{h' \in H} \pi(h')\frac{\sum_{a \in A(H)} \sum_{h \in H}\pi(h \cdot a)u_h(a)}{\sum_{h \in H} \pi(h)} \\
- \sum_{h' \in H} \sum_{a \in A(h')} \pi(h' \cdot a)\frac{\sum_{h \in H}\pi(h \cdot a)u_h(a)}{\sum_{h \in H} \pi(h \cdot a)} \\
\end{align*}
Using property 3 of $\mathcal{H}$
\begin{align*}
= \sum_{h' \in H} \pi(h')\frac{\sum_{a \in A(H)} \sum_{h \in H}\pi(h \cdot a)u_h(a)}{\sum_{h \in H} \pi(h)} \\
- \sum_{a \in A(H)} \sum_{h' \in H} \pi(h' \cdot a)\frac{\sum_{h \in H}\pi(h \cdot a)u_h(a)}{\sum_{h \in H} \pi(h \cdot a)} \\
= \sum_{a \in A(H)} \sum_{h \in H} \pi(h \cdot a)u_h(a)
- \sum_{a \in A(H)} \sum_{h \in H} \pi(h \cdot a)u_h(a) \\
= 0
\end{align*}
Because the expected value is 0 for an arbitrary $H$, the expected
value is 0 for the sum of all $H \in \mathcal{H}$.
\end{proof}

\begin{theorem}
$\mathop{\mathbb{E}}_{z \in Z} [\sum_{H \in \mathcal{H}} k_H(z) ] = 0$
\end{theorem}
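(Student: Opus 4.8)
\section{Proof Proposal}

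The plan is to derive the theorem as an immediate corollary of Lemma~\ref{lemma:average-zero} using linearity of expectation. Since $\mathcal{H}$ is a finite partition, the sum $\sum_{H \in \mathcal{H}} k_H(z)$ is a finite sum, so the expectation operator $\mathop{\mathbb{E}}_{z \in Z}[\cdot]$ commutes with it. First I would write
\begin{align*}
\mathop{\mathbb{E}}_{z \in Z}\left[\sum_{H \in \mathcal{H}} k_H(z)\right]
= \sum_{H \in \mathcal{H}} \mathop{\mathbb{E}}_{z \in Z}\left[k_H(z)\right].
\end{align*}
Then I would invoke Lemma~\ref{lemma:average-zero}, which gives $\mathop{\mathbb{E}}_{z \in Z}[k_H(z)] = 0$ for every $H \in \mathcal{H}$, so each summand on the right-hand side vanishes and the total is $0$.

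The only thing worth spelling out is the justification for exchanging the finite sum and the expectation: expanding the expectation as $\sum_{z \in Z} \pi(z) \sum_{H \in \mathcal{H}} k_H(z)$ and swapping the two finite sums (both $Z$ and $\mathcal{H}$ are finite since the game is finite) yields $\sum_{H \in \mathcal{H}} \sum_{z \in Z} \pi(z) k_H(z) = \sum_{H \in \mathcal{H}} \mathop{\mathbb{E}}_{z \in Z}[k_H(z)]$. No convergence subtleties arise, so this interchange is purely formal.

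There is essentially no obstacle here — the content of the result is entirely contained in the lemma, and the theorem merely packages it. The last sentence of the lemma's proof (``Because the expected value is 0 for an arbitrary $H$, the expected value is 0 for the sum of all $H \in \mathcal{H}$'') already anticipates exactly this step, so the formal proof of the theorem is one line: apply linearity of expectation and Lemma~\ref{lemma:average-zero}. If I wanted to be slightly more careful I would note that this also completes the argument that $\text{\VAT{}}(z)$ from Equation~\ref{eqn:vat} is unbiased, since it is the sum of the imaginary-observations estimator (unbiased by prior work) and $\sum_{H \in \mathcal{H}} k_H(z)$ (mean zero by this theorem), but strictly speaking that remark belongs outside the theorem's proof.
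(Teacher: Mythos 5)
Your proposal is correct and matches the paper's own proof exactly: both derive the theorem as an immediate consequence of Lemma~\ref{lemma:average-zero} via linearity of expectation over the finite sum of $k_H$ terms. Your extra remark justifying the interchange of finite sums is a harmless elaboration of the same one-line argument.
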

\begin{proof}
This immediately follows from Lemma~\ref{lemma:average-zero}, as the
expected value of a sum of terms is the sum of the expected values of
the terms, which are all $0$.
\end{proof}

\section{Experimental Results}
We demonstrate the effectiveness of \VAT{} in two poker games,
\leduc{} and heads-up no-limit \texas{} (HUNL). Both \leduc{} and
HUNL have a convenient structure where all actions are public, and
there is a mix of chance events in the form of completely public board
cards and completely private hole cards. The uncomplicated structure
leads to a clear choice for the partition $\mathcal{H}$. Each
$H \in \mathcal{H}$ has states with identical betting, public board
cards, and private hole cards for any players in $P_o$.

In all experiments the value functions $u_h(a)$ are self-play values,
generated by solving the game to find a Nash equilibrium strategy
using a variant of the Monte Carlo CFR
algorithm~\cite{09nips-mccfr}. For each player $p_x$ and partition
$H$, we save the average observed values for opponent $p_y$ across all
iterations, giving us a value $w_H(a) \approx \sum_{h \in
H} \pi_{p_x}(h \cdot a) \mathop{\mathbb{E}}[v_{p_y}(h)] / \sum_{h \in
H} \pi_{p_x}(h \cdot a)$. $w_H(a)$ is an expected self-play value for
$p_y$ at $H$, given the probability distribution of hands for $p_x$
that reach $H$ and play $a$. Because we are playing a zero-sum game
and $v_{p_x}(h)=-v_{p_y}(h)$, we can use $u_h(a) = -w_{H}(a)$ $\forall
h \in H$. In HUNL, which is too large to solve directly, we solve a
very small abstraction of the game~\cite{03ijcai-psopti,Ganzfried14}
with only 8 million information sets, which gives us a rough estimate
of $w_H(a)$ that is identical across many partitions of HUNL states.

Poker is played in an alternating fashion, where agents take turns
playing in different positions. Let us say we have two agents, $x$ and
$y$. In poker, in odd-numbered games (starting at game 1) we would
have $x$ as player 1 and $y$ as player 2, and in even-numbered games
we would have $y$ as player 1 and $x$ as player 2. For the
experiments, we model this as an extended game where there is an
initial 50/50 chance event that assigns a position to the agent, along
with a \VAT{} correction term for the position.

All experiments will compare \VAT{} value estimates with the
unmodified game values from counting chips, the MIVAT value estimate,
and the combination of MIVAT and imaginary observations using the
strategy for agent $x$ (MIVAT+IO$_x$). Because poker is a zero-sum
game, it is sufficient to present results from the point of view of
agent $x$.

\subsection{\Leduc{}}
The small size of \leduc{} lets us test both the case where $P_a$ only
contains one non-chance player, as well as the full-knowledge case
where $P_a = P$. \VAT{} and chip count results are generated from
observations of 100,000 games. All of the numbers are in units of
chips, where \leduc{} has a 1 chip ante, and 2 chip and 4 chip bets in
the first and second rounds, respectively.

Figure~\ref{fig:leduc_selfplay} looks at self-play, where both $x$ and
$y$ play the same Nash equilibrium that was used to generate
$u_h(a)$. The true expected value for player $x$ is 0. Because we are
using value functions computed from their self-play, this experiment
represents a best-case situation. With knowledge of both player's
strategies, the only remaining variance comes from noise in the
$u_h(a)$ value function that arises from the sampling and averaging
used in the MCCFR computation.

\begin{figure}[!ht]
\centering
\begin{tabular}{|l||r|r|}
\hline
Estimator & $\bar{v}_x$ & $SD(v_x)$ \\
\hline
chips & 0.01374 & 3.513 \\
MIVAT & 0.00448 & 2.327 \\
MIVAT+IO$_x$ & 0.00987 & 1.928  \\
$P_a=\{p_c, x\}$ & -0.00009 & 0.00643 \\
$P_a=\{p_c, x, y\}$ & -0.00001 & 0.00377 \\
\hline
\end{tabular}
\caption{Value estimates for self-play in \leduc{}}
\label{fig:leduc_selfplay}
\end{figure}

With knowledge of both player's strategies, we reduce the per-game
standard deviation of the estimated player value by a little less than
99.9\%. This situation might be unlikely in practice, but does
demonstrate that the \VAT{} computation correctly shifts every
observed outcome to the expected player value, given full correct
information. Surprisingly, the one-sided evaluation where we use only
one player's strategy still reduces the standard deviation by
99.8\%. Using MIVAT or MIVAT+IO$_x$, we only see a 33.8\% and 45.1\%
reduction, respectively.

Moving away from the best-case situation, Figure~\ref{fig:leduc_asymm}
looks at games where $x$ is the same Nash equilibrium from above, and
$y$ is an agent that randomly calls or raises. Given these strategies,
the true expected value for player $x$ is 0.69358.

\begin{figure}[!ht]
\centering
\begin{tabular}{|l||r|r|}
\hline
Estimator & $\bar{v}_x$ & $SD(v_x)$ \\
\hline
chips & 0.71673 & 5.761 \\
MIVAT & 0.68932 & 4.412 \\
MIVAT+IO$_x$ & 0.69968 & 4.295 \\
$P_a=\{p_c, x\}$ & 0.69050 & 1.437 \\
$P_a=\{p_c, x, y\}$ & 0.68698 & 1.782 \\
$P_a=\{p_c, y\}$ & 0.69614 & 2.983 \\
\hline
\end{tabular}
\caption{Value estimates for dissimilar strategies in \leduc{}}
\label{fig:leduc_asymm}
\end{figure}

Using the call/raise strategy for $y$ demonstrates that the amount of
variance reduction does depend on how well the value functions
estimates the true expected value of a situation. We used value
functions which encode self-play values for $x$, and while $y$ is
sufficiently similar to $x$ that the true values are still positively
correlated with the estimated values for both players, they are no
longer an almost-perfect match. Despite the strategic mismatch,
using \VAT{} we see a reduction in the standard deviation of 48\% to
75\% compared to the basic chip-count estimate. All of the \VAT{}
estimators outperform the 25\% reduction using MIVAT plus imaginary
observations.

\subsection{No-limit \Texas{}}
The game of HUNL better represents a potential real-world
application. The game is commonly played, it is too large to easily
compute exact expected values directly even when the strategy of both
agents is known, average win rate is a statistic of interest to
players and observers, and the high per-game variance of outcomes
obscures the win rate even after hundreds of thousands of hands.

The variant of HUNL that we use has a small blind of 1 chip and big
blind of 2 chips, and each player has 200 chips (\ie 100 big blinds.)
Due to the large branching factor of chance events, we can only
present results for \VAT{} analysis using the strategy of one
agent. All results are generated from observations of 1 million
games.

We start by looking at self-play, using a low-quality Nash equilibrium
approximation for both players $x$ and $y$. The value functions
$u_h(a)$ are generated using this same weak
approximation. Figure~\ref{fig:texas_selfplay} gives the results for
the different estimation methods. The true expected value for $x$ is
0.

\begin{figure}[!ht]
\centering
\begin{tabular}{|l||r|r|}
\hline
Estimator & $\bar{v}_x$ & $SD(v_x)$ \\
\hline
chips & 0.03871 & 25.962 \\
MIVAT & 0.02038 & 21.293 \\
MIVAT+IO$_x$ & 0.02596 & 16.073 \\
$P_a=\{p_c, x\}$ & 0.00186 & 8.095 \\
\hline
\end{tabular}
\caption{Value estimates for self-play in HUNL}
\label{fig:texas_selfplay}
\end{figure}

In Figure~\ref{fig:texas_asymm} we look at games where $x$ uses the
same low-quality approximation of a Nash equilibrium, and $y$ is a
much stronger agent using a high-quality approximation of a Nash
equilibrium. The value functions $u_h(a)$ are still generated using
the low-quality approximation. The true expected value for player $x$
is not known.

\begin{figure}[!ht]
\centering
\begin{tabular}{|l||r|r|}
\hline
Estimator & $\bar{v}_x$ & $SD(v_x)$ \\
\hline
chips & -0.10017 & 26.308 \\
MIVAT & -0.11565 & 21.546 \\
MIVAT+IO$_x$ & -0.11297 & 16.051 \\
$P_a=\{p_c, x\}$ & -0.10971 & 8.301 \\
\hline
\end{tabular}
\caption{Value estimates for dissimilar strategies in HUNL}
\label{fig:texas_asymm}
\end{figure}

In both experiments, we see a 39\% reduction in the standard deviation
when using MIVAT with imaginary observations, and a bit more than a
68\% reduction using \VAT{}. It must be noted that our value function
could be improved, as the 18\% reduction for MIVAT in this experiment
does not match the 23\% improvement previously demonstrated using
values learned from data~\cite{09white-mivat}. The small abstract game
used to generate the value functions does not do a good job of
understanding the consequences of cards being dealt, as it can not
distinguish most card situations. Despite this handicap, the
full \VAT{} estimator still significantly improves on the state of the
art for low-variance value estimators for imperfect information games.

\section{Conclusions}
We introduce a technique for value estimation in imperfect information
games that extends and combines existing techniques. \VAT{} uses
heuristic value functions, knowledge of game structure, and knowledge
about player strategies to both add a control variate term for chance
and player decisions, and to average over multiple possible outcomes
given a single observation. We prove \VAT{} is unbiased, and
demonstrate that with (almost) perfect value functions we see (almost)
complete elimination of variance. Even with imprecise value functions,
we show variance reduction in a real-world game that significantly
exceeds existing techniques. \VAT{}'s three times reduction in
standard deviation allows us to achieve the same statistical
significance with ten times less data. A factor of ten is substantial:
for problems with limited data, like human play against bots, ten
times as many games could be the distinction between practical and
impractical.

\bibliographystyle{named}
\bibliography{paper}
\end{document}